\newcommand{\type}{report}
\newcommand{\verbosity}{short}
\newcommand{\mode}{final}
\newcommand{\apx}{proofs}
\newcommand{\numberlevel}{section}
\newcommand{\counterlevel}{same}
\newcommand{\colorthm}{false}
\newif\ifsamecounter\ifdefstring{\counterlevel}{same}{\samecountertrue}{\samecounterfalse}
\newif\iflong\ifdefstring{\verbosity}{long}{\longtrue}{\longfalse}
\newif\ifshort\ifdefstring{\verbosity}{long}{\shortfalse}{\shorttrue}
\newif\iffinal\ifdefstring{\mode}{final}{\finaltrue}{\finalfalse}
\newif\ifdraft\ifdefstring{\mode}{draft}{\drafttrue}{\draftfalse}
\newif\ifreview\ifdefstring{\mode}{review}{\reviewtrue}{\reviewfalse}
\newif\ifsubmission\ifdefstring{\mode}{submission}{\submissiontrue}{\submissionfalse}
\newcommand{\bibstylename}{unsrtnat}
\newcommand{\contact}[1]{}
\newcommand{\affiliation}[1]{\\#1}
\ifdefstring{\type}{conference}{%

    \documentclass{article} 

    \usepackage{aaai21}  
    \usepackage{times}  
    \usepackage{helvet} 
    \usepackage{courier}  
    \usepackage[hyphens]{url}  
    \usepackage{graphicx} 
    \urlstyle{rm} 
    \usepackage{natbib}  
    \usepackage{caption} 
    \frenchspacing  
    \setlength{\pdfpagewidth}{8.5in}  
    \setlength{\pdfpageheight}{11in}  

    \setcounter{secnumdepth}{2} 

    \renewcommand{\bibstylename}{aaai21} 


    \pdfinfo{
        /Title (Exact Reduction of Huge Action Spaces in General Reinforcement Learning)
        /Author (Sutan J. Majeed, Marcus Hutter)
        /TemplateVersion (2021.2)
    }

    \author{
        Sultan J. Majeed\textsuperscript{\rm 1}\thanks{The contact author.}\\
        Marcus Hutter\textsuperscript{\rm 2}\\
    }
    \affiliations{
        \textsuperscript{\rm 1,2}Research School of Computer Science, ANU, Canberra\\
        \textsuperscript{\rm 2}Google DeepMind, London\\
        \textsuperscript{\rm 1}www.sultan.pk, \textsuperscript{\rm 2}www.hutter1.net

    }
}{\ifdefstring{\type}{journal}{%

        \documentclass{report} 

        \renewcommand{\bibstylename}{apalike}

    }{

        \documentclass[12pt]{xarticle}

        \usepackage[a4paper, total={8.5in, 11in},margin=1in]{geometry}

        \usepackage{libertine}

        \renewcommand{\bibstylename}{unsrtnat}

        \emergencystretch 3em

        \author{
            Sultan J. Majeed\thanks{The contact author.}
            \contact{sultan.pk} \affiliation{Research School of Computer Science, ANU}
            \and
            Marcus Hutter
            \contact{hutter1.net} \affiliation{Google DeepMind \& Research School of Computer Science, ANU}
        }

}}
\algnewcommand\Input{\item[{\textbf{Input:}}]}
\algnewcommand\Output{\item[{\textbf{Output:}}]}
\ifdefempty{\apx}{}{
}
\crefname{subequation}{case}{cases}
\Crefname{subequation}{Case}{Cases}
\ifdraft\usepackage{showkeys}\fi%
\newtheorem{thm}{Should Not Be Visible}[\numberlevel]
\undefined \newtheorem{theorem}[thm]{Theorem}\fi
\undefined \newtheorem{lemma}[thm]{Lemma} \fi
\undefined \newtheorem{proposition}[thm]{Proposition} \fi
\undefined \newtheorem{definition}[thm]{Definition} \fi
\undefined \newtheorem{remark}[thm]{Remark} \fi
\undefined \newtheorem{corollary}[thm]{Corollary} \fi
\undefined \newtheorem{example}[thm]{Example}\fi
\undefined \newtheorem{assumption}[thm]{Assumption}\fi
\undefined \newtheorem{conjecture}[thm]{Conjecture}\fi
\undefined \newtheorem{theorem}{Theorem}[\numberlevel] \fi
\undefined \newtheorem{lemma}{Lemma}[\numberlevel] \fi
\undefined \newtheorem{proposition}{Proposition}[\numberlevel] \fi
\undefined \newtheorem{definition}{Definition}[\numberlevel] \fi
\undefined \newtheorem{remark}{Remark}[\numberlevel] \fi
\undefined \newtheorem{corollary}{Corollary}[\numberlevel] \fi
\undefined \newtheorem{assumption}{Assumption}[\numberlevel] \fi
\undefined \newtheorem{conjecture}{Conjecture}[\numberlevel] \fi
\ifdefstring{\colorthm}{true}{

    \let\thmOrg\theorem
    \let\endthmOrg\endtheorem
    \renewenvironment{theorem}{\begin{tcolorbox}\thmOrg}{\endthmOrg\end{tcolorbox}}

    \let\lemOrg\lemma
    \let\endlemOrg\endlemma
    \renewenvironment{lemma}{\begin{tcolorbox}\lemOrg}{\endlemOrg\end{tcolorbox}}

    \let\proOrg\proposition
    \let\endproOrg\endproposition
    \renewenvironment{proposition}{\begin{tcolorbox}\proOrg}{\endproOrg\end{tcolorbox}}

    \let\defOrg\definition
    \let\enddefOrg\enddefinition
    \renewenvironment{definition}{\begin{tcolorbox}\defOrg}{\enddefOrg\end{tcolorbox}}

    \let\corOrg\corollary
    \let\endcorOrg\endcorollary

    \let\asmOrg\assumption
    \let\endasmOrg\endassumption

    \let\conOrg\conjecture
    \let\endconOrg\endconjecture

    \let\rmkOrg\remark
    \let\endrmkOrg\endremark

}{}
\newcommand\restrict[2]{{
        \left.\kern-\nulldelimiterspace 
        #1 
        \vphantom{\big|} 
        \right|_{#2} 
}}
\tikzset{
    >=stealth',
    roundbox/.style={
        rectangle,
        rounded corners,
        draw=black, very thick,
        text width=6.5em,
        minimum height=2em,
        text centered},
    thickarrow/.style={
        ->,
        thick,
        shorten <=2pt,
        shorten >=2pt,}
}
\title{
    Exact Reduction of Huge Action Spaces in General Reinforcement Learning%
    %
}
\begin{document}


\ifdefempty{\mode}{
    \pagestyle{empty}
    \newpage
    \listoftodos[Todo \& Notes]
    \newpage
    \pagestyle{plain}
    \setcounter{page}{1}
}{}


\maketitle


\begin{abstract}
    The reinforcement learning (RL) framework formalizes the notion of learning with interactions.
    Many real-world problems have large state-spaces and/or action-spaces such as in Go, StarCraft, protein folding, and robotics or are non-Markovian, which cause significant challenges to RL algorithms.
    In this work we address the large action-space problem by sequentializing actions, which can reduce the action-space size significantly, even down to two actions at the expense of an increased planning horizon. We provide explicit and exact constructions and equivalence proofs for all quantities of interest for arbitrary history-based processes. In the case of MDPs, this could help RL algorithms that bootstrap.
    In this work we show how action-binarization in the non-MDP case can significantly improve Extreme State Aggregation (ESA) bounds. ESA allows casting any (non-MDP, non-ergodic, history-based) RL problem into a fixed-sized non-Markovian state-space with the help of a surrogate Markovian process. On the upside, ESA enjoys similar optimality guarantees as Markovian models do. But a downside is that the size of the aggregated state-space becomes exponential in the size of the action-space. In this work, we patch this issue by binarizing the action-space. We provide an upper bound on the number of states of this binarized ESA that is logarithmic in the original action-space size, a double-exponential improvement.
\end{abstract}


\section{Introduction}

The reinforcement learning (RL) setting can be described by an agent-environment interaction \cite{Sutton2018}. The agent $\Pi$ has an action-space $\A$ to choose its actions from while the environment $P$ reacts to the action by dispensing an observation and a reward from the sets $\O$ and $\R \subseteq \SetR$, respectively, see \Cref{fig:interaction}. For simplicity, we assume that these sets are finite and hence the rewards are bounded. Even with these restrictions, the problem of RL does not trivialize, i.e.\ the agent can not learn the optimal behavior without further structure. Under a suitable definition of the ``state'' of environment, the resultant set of states might be huge or even infinite \cite{Powell2011}.

\begin{figure}[!ht]
    \centering
    \begin{tikzpicture}[
    node distance = 7mm and -3mm,
    innernode/.style = {draw=black, thick, fill=gray!30,
        minimum width=2cm, minimum height=0.5cm,
        align=center},
    outernode/.style = {draw=black, thick, rounded corners, fill=none,
        minimum width=1cm, minimum height=0.5cm,
        align=center},
    endpoint/.style={draw,circle,
        fill=gray, inner sep=0pt, minimum width=4pt},
    arrow/.style={->,thick,rounded corners},
    point/.style={circle,inner sep=0pt,minimum size=2pt,fill=black},
    skip loop/.style={to path={-- ++(#1,0) |- (\tikztotarget)}},
    every path/.style = {draw, -latex}
    ]
    \node (start) {Start};
    \node (h) [innernode]{History};
    \node (phi) [innernode, below=of h]{Abstraction $(\phi)$};
    \node (pi) [innernode, below=of phi]      {Policy $(\Pi)$};
    \node [outernode, align=left, inner sep=0.5cm, fill=none, fit=(h) (phi) (pi)] (agent) {};
    \node[below right, inner sep=3pt, fill=none] at (agent.north west) {Agent};
    \node[outernode, left=120pt of agent, fit=(agent.north)(agent.south), inner sep=0pt] (env) {};
    \node[below right, inner sep=0pt, fill=none, rotate=90, anchor=center] at (env) {Environment $(P)$};
    \node[endpoint, above= -2pt of env] (or_env) {};
    \node[endpoint, below= -2pt of env] (a_env) {};
    \node[endpoint, below= -2pt of agent] (a_agent) {};
    \node[endpoint, above= -2pt of agent] (or_agent) {};

    \path (a_agent) edge[arrow,bend left] node[below]{$a_t$} (a_env);
    \path (or_env) edge[arrow, bend left] node[above]{$o_{t+1}r_{t+1}$} (or_agent);
    \path (or_agent) edge[arrow] node[right]{$o_{t+1}r_{t+1}$} (h);
    \path (h) edge[arrow] node[above=0.5pt,midway,name=h_phi,point]{} node[right]{$h_t$} (phi);
    \path (phi) edge[arrow] node[left]{$\phi(h_t)$} (pi);
    \path (pi) edge[arrow] node[above=0.5pt,midway,name=pi_a,point]{} node[left]{$a_t$} (a_agent);
    \path (pi_a) edge[arrow, skip loop=1.5cm] (h.east);
    \path (h_phi) edge[->, skip loop=-1.5cm, thick, rounded corners] (h.west);
    \end{tikzpicture}
    \caption{The agent-environment interaction.}
    \label{fig:interaction}
\end{figure}

The problem of RL is plagued with the curse of dimensionality. The sizes of an appropriately defined set of states\footnote{We prefer not to call the set of observations $\O$ the set of states. This set becomes a set of states under strong assumptions, see \Cref{sec:esa} for more details on this distinction.} and action-space play an important role in the choice of algorithms, architectures and solution techniques used to solve the task \cite{Sutton2018}.

It is usually required to produce a relatively small set of states to make the problem tractable \cite{Lattimore2014}. There are many ways to achieve such approximations/abstractions, e.g.\ state aggregation \cite{Li2006,Abel2016,Hutter2016}, homomorphism \cite{Majeed2019}, linear function approximation \cite{Bertsekas1996}, or neural networks \cite{Mnih2015} just to name a few. The usual assumption for such abstractions is that they try to produce a Markovian representation of the environment, which is known as a Markov Decision Process (MDP) \cite{Hutter2009}. In an MDP  the most recent (abstract) observation is sufficient to predict \emph{any} future event\footnote{An event is any set of action-observation-reward sequences.}, but not all events are equally valuable. Some events might not lead to a high rewarding state and/or some distinctions are not really necessary to perform well \cite{McCallum1995}. For example, the agent might end up experiencing two completely different streams of observations with the same reward structure. An algorithm which tries to produce a Markovian representation would try to make this ``unnecessary'' distinction.

A lot can be achieved in terms of the representation power if an algorithm only makes ``useful'' distinctions, i.e.\ the distinctions (or states) which respect the reward structure \cite{McCallum1995,Hutter2016,Majeed2019}.
In some cases, such ``useful'' but non-Markovian abstractions reduce the effective state-space dramatically. Usually, a smaller state-space facilitates faster learning \cite{Strehl2009,Lattimore2014}.

The usual methods of state or action-space reductions either (1) reduce the problem to a fixed size where the quality of reduction deteriorates as the original problem becomes more complicated, or (2) provide a problem-specific reduction which usually grows, albeit much slower, as the original problem grows \cite{Powell2011}. In Markovian abstractions the size of the state-space grows with the size of the observation and reward spaces. For example, if an MDP abstraction produces states from some low-resolution images then we need more states to handle high-resolution versions of the input images because the high-resolution images need a bigger transition matrix to predict the next image. However, it is perfectly plausible that the increased resolution might not be ``useful'' to achieve better rewards.

To the best of our knowledge, extreme state aggregation (ESA), a non-MDP abstraction framework, is the only method which provides a provable upper bound on the size of required state-space uniformly\footnote{Which depends only on the size of the action-space, discount factor and the optimality gap.} for all problems \cite{Hutter2016}. However, a \emph{downside} of ESA is that the size of the aggregated state-space is \emph{exponential} in the size of the action-space, see \Cref{thm:esa}.
In this paper, we move the research further in this direction. We provide a variant of ESA that can help provide much more compact representations as compared to MDP abstractions. Our approach improves the key upper bound on the size of the state-space in the original ESA framework.

The key trick to achieve this improvement is to sequentialize the actions. Often $\A$ already has a natural vector structure $\B^d$, e.g.\ real valued activators in robotics ($\B = \SetR$) or (padded) words ($\B = \{a, \dots, z, \textvisiblespace\}$), or more generally $\B_1 \times \dots \times \B_d$, where $\B$ denotes a finite set of \emph{decision symbols}.
In this case, sequentialization is natural, but one may further want to binarize $\B$ to $\SetB^{d'}$ esp. for ESA (\Cref{thm:bin-esa}).
If actions are (converted to) $\B$-ary strings,
the RL agent could execute the action ``bits'' sequentially with fictitious dummy observations in-between.

The example in \Cref{fig:example} provides a naive way of \emph{sequentializing} the actions in an MDP. Apparently, it might seem that such sequentialization of the action-space would be of no help, as the state-space would blow up, and it is simply shifting the problem from the actions to the states. However, we prove that this can be avoided. Most importantly, the universal upper bound on the effective state-space of ESA remains valid. Our scheme of sequentializing the actions achieves a \emph{double exponentially} improved bound; compare \Cref{thm:bin-esa} with \Cref{thm:esa}.

\begin{figure}[h]
   \centering
   \begin{tikzpicture}[level/.style={sibling distance=40mm/#1,thick},square/.style={regular polygon,regular polygon sides=4,minimum size=1mm, inner sep=0,thick}]
   \node [circle,draw,inner sep=0, minimum size=8mm,thick] (z){$s$}
   child {node [square,draw,inner sep=0, minimum size=8mm,thick] (a) {$s_0$}
       child {node [circle,draw,fill=gray!30,thick] (b) {$a_{00}$}}
       child {node [circle,draw,fill=gray!30,thick] (g) {$a_{01}$}}
   }
   child {node [square,draw,inner sep=0, minimum size=8mm,thick] (j) {$s_1$}
       child {node [circle,draw,fill=gray!30,thick] (k) {$a_{10}$}}
       child {node [circle,draw,fill=gray!30,thick] (l) {$a_{11}$}}
   };
   \end{tikzpicture}
   \caption{A simple sequentialization example in an MDP. To see how the actions sequentialized, consider an agent which has to choose among four alternatives, e.g.\ $\A = \{a_{00}, a_{01}, a_{10}, a_{11}\}$. Let the agent receive a state signal $s$ from the environment. It first decides between a partition of actions, say two actions each, $\{a_{00}, a_{01}\}$ and $\{a_{10}, a_{11}\}$. \emph{After} it has decided on the bifurcation, the \emph{extended} state becomes $s_x$, where $x$ is the decision of the first stage. Now the agent \emph{on} this \emph{extended} state $s_x$ makes its second decision to choose from the \emph{short-listed} set of actions. This way, the agent only selects among \emph{two} alternatives at each stage by \emph{tripling} the effective state-space.}
   \label{fig:example}
\end{figure}

Along the way, we also establish some other key results, which are interesting and useful on their own. We provide explicit and exact constructions and equivalence proofs for all quantities of interest (\Cref{sec:bin-esa}) for arbitrary history-based processes, which are then used to double-exponentially improve the previous ESA bound (\Cref{thm:bin-esa}).
In the special case of MDPs, we show that through a sequentialized scheme (of augmenting observations with partial decision vectors) the resultant ``sequentialized process'' preserves the Markov property (\Cref{pro:mdpismdp}), which should help RL algorithms that bootstrap, though demonstrating or proving this is left for future work. Moreover in \Cref{lem:uplift}, we prove that the stipulated sequentialization scheme preserves near-optimality, i.e.\ a near-optimal policy of the sequentialized process is also near-optimal in the original process.

The rest of the paper is organized as follows. \Cref{sec:notation} puts down the necessary notation. In \Cref{sec:setup}, we formally set up the problem. A framework to sequentialize actions is provided in \Cref{sec:bin-esa}, along with other key results (\Cref{pro:mdpismdp,lem:uplift}). In \Cref{sec:esa} we combine our sequentialization framework with ESA to improve the upper bound on the size of the state-space (\Cref{thm:bin-esa}). We conclude the paper in \Cref{sec:conclusion} with an outlook.

\section{Notation}\label{sec:notation}

This paper is notation heavy, but we use a consistent notation through out.
The set of natural numbers is $\SetN := \{1, 2, \dots\}$, $\SetB := \{0,1\}$ is a set of binary symbols, and $\SetR$ is the set of reals. We denote by $\Dist(X)$ the set of probability distributions over any set $X$. The concatenation of two objects (or strings) is expressed through juxtaposition, e.g.\ $xy$ is a concatenation of $x$ and $y$. We express a finite string with boldface, e.g.\ $\v x = x_1 x_2 \dots x_{\abs{\v x}}$ where $\abs{{}\cdot{}}$ is used to denote the length or cardinality of the object. The individual members of a string or a vector may be accessed as $\v x_i = x_i$ for any $i \leq \abs{\v x}$. A substring of length $i \leq \abs{\v x}$ is denoted as $\v x_{\leq i} = x_1x_2\dots x_{i}$ and $\v x_{<i} = x_1x_2\dots x_{i-1}$. We interchangeably use the same notation for vectors and strings, e.g.\ $\v x \in \B^d$ is a $d$-dimensional $\B$-ary decision vector which may also be expressed as a string. This choice simplifies the notation and saves redundant variables. If a variable is time-indexed, we express the continuation of the variable with a prime on it, e.g.\ if $x := x_{t}$ then $x' := x_{t+1}$ where $:=$ denotes equality by definition. A small scaler value (usually the error tolerance) is denoted by $\eps > 0$.
A different member of the same set is expressed with a dot on it, e.g.\ $x, \d x \in \B$. We express the fact of $\v x$ being a prefix of $\v y$ by $\v x \sqsubseteq \v y$ or $\v y \sqsupseteq \v x$. Moreover, $\vo{xy}$ represents a vector that is point-wise joined, i.e.\ $\vo{xy}_i := \v x_i \v y_i$.

\section{Problem Setup}\label{sec:setup}

This section provides the formal foundations for us to build up to the main result of this work. We formulate the problem of RL from the ground up without starting from the usual Markovian assumption \cite{Sutton2018}. We formalize a history-based RL setup. After formalizing the (general) RL problem, we set up the scheme of sequentializing the decision-making process to reduce the effective action-space for the agent. Especially for our main result about ESA, we sequentialize the action-space to \emph{binary} decisions.

Although this work assumes very little about the RL problem, we assume that the size of the action-space is finite and $\abs{\A} = \abs{\B}^d$ for some $d \in \SetN$. The latter assumption is not restrictive, as we can extend the set of actions by repeating some of the actions. It is important to note that these repeated actions should be labeled distinctly. This way we can have a bijection between the original (extended) action-space and the sequentialized one. For example, let an action set be $\{a_1,a_2,a_3,a_4,a_5\}$. One possible extended set, with repetition, for $\abs{\B} = 2$ and $d = 3$ is $\A := \{a_1,a_2,a_3,a_4,a_5,a_{5_1},a_{5_2},a_{5_3}\}$. Where, the actions $a_{5_i}$ for $i \leq 3$ are functionally the same as $a_5$, i.e.\ taking $a_5$ or any $a_{5_i}$ action has the same effect, but they are labeled distinctly.

Note that continuous action-spaces could be approximately sequentialized/binarized by using the binary expansion of reals to some desired precision, say $\delta$. Our main bound will only depend logarithmically on $\delta$.

\subsection{General Reinforcement Learning}

We consider a general reinforcement learning (GRL) setup where the agent keeps the complete history of interaction \cite{Hutter2009}. The (infinite\footnote{For simplicity, we assume the interaction never stops. We do not consider the case where the agent or the environment can stop responding. It complicates the modeling beyond the scope of this work.}) interaction produces an infinite history. Recall that $\O, \R$ and $\A$ represent some \emph{finite} sets of observations, rewards and actions, respectively. This also implies that the rewards are bounded. The set of all finite histories\footnote{Note that this set (of underlying ``state'' space) is (countably) infinite.} is denoted by
\beq\label{eq:history}
\H := \bigcup_{t=1}^\infty \underbrace{\O \times \R \times \A \times \dots \times \O \times \R \times \A}_{(t-1)-\text{step interactions}}\times\O\times\R
\eeq
which is used to express most of the quantities in our setup, e.g.\ the environment, agent, and value functions. Note that the history set $\H$ does not contain the empty history. This is a design choice we make to be consistent with the standard RL setup \cite{Sutton2018} where the initial state (in our case the initial observation and reward) is chosen by some ``initial'' distribution.

Formally, the environment $P$ is a (conditional) probability function such that $P : \H \times \A \to \Dist(\O \times \R)$. Similarly, the agent is also expressible as a (conditional) distribution on the action-space, i.e.\ $\Pi: \H \to \Dist(\A)$. For a fixed policy $\Pi$ the expected discounted future sum of rewards is the value of the policy. At any history $h \in \H$ and action $a \in \A$ the action-value function (or Q-function) is expressed as
\beq\label{eq:bellman}
Q^\Pi(h,a) := \sum_{o'r'} P(o'r'|ha) \left(r' + \g V^\Pi(hao'r')\right)
\eeq
where $V^\Pi(h) := \sum_a Q^\Pi(h,a)\Pi(a|h)$ is the value function of $\Pi$ and $0\leq\g<1$ is the discount factor. \Cref{eq:bellman} is known as the Bellman equation (BE). The optimal behavior (or policy) is the one which achieves the maximum value for all histories, i.e.\ $\Pi^*(h) :\in \argmax_{\Pi} V^\Pi(h)$. The optimal value (and action-value) functions, $V^* := V^{\Pi^*}$ and $Q^* := Q^{\Pi^*}$, of this optimal policy satisfy the following optimal Bellman equation (OBE) \cite{Hutter2016,Sutton2018}.
\beq
Q^*(h,a) := \sum_{o'r'} P(o'r'|ha) \left(r' + \g V^*(hao'r')\right)\label{eq:obe}
\eeq
where $V^*(h) := \max_a Q^*(h,a)$. The agent defined in this sub-section works with the original action-space $\A$ and keeps the histories from $\H$. In the next sub-section, we formulate another agent which only works in the ``sequentialized'' action-space, i.e.\ it takes decisions in a sequence of $\B$-ary choices, and responds \emph{only} to the histories generated by this $\B$-ary interaction, see \Cref{fig:bianary-interaction}. In the extreme case, this agent may only take binary decisions by sequentializing the action-space to binary sequences, i.e.\ $\B = \SetB$.

\subsection{Sequential Decisions}

We want to transform the action-space into a sequence of $\B$-ary decision code words, which are decided sequentially. To map the actions between the original action-space and the sequentialized decision-space, we define a pair of encoder and decoder functions. Let $C$ be any encoding function that maps the actions to a $\B$-ary decision code of length $d$, i.e.\ $C : \A \to \B^d$. A decoder function $D : \B^d \to \A$ sends the $\B$-ary decision sequences generated by $C$ back to the actions in the (original) action-space.
In this work, the choices of $C$ and $D$ do not matter\footnote{The choice could matter in practical implementation of such agents. For example, a clever choice of such functions might produce sparse $\B$-ary decision sequences for the optimal actions, hence it may facilitate in learning such optimal $\B$-ary decision sequences.} as long as they are bijections such that $D(C(\A)) = \A$.

This sequentialization of the action-space changes the interaction history. The generated histories are no longer members of $\H$. The goal of this paper is to argue that an agent can still work with the sequentialized histories only. The agent can plan, learn and interact with the environment using $\B$-ary actions and keeping sequentialized histories. Hence, the agent can be agnostic to the original action-space and with the state provided through an appropriate abstraction it can only take $\B$-ary decisions at every time step, see \Cref{fig:bianary-interaction}.

\begin{figure}[!ht]
    \centering
    \begin{tikzpicture}[
    node distance = 7mm and -3mm,
    innernode/.style = {draw=black, thick, fill=gray!30,
        minimum width=2cm, minimum height=0.5cm,
        align=center},
    outernode/.style = {draw=black, thick, rounded corners, fill=none,
        minimum width=1cm, minimum height=0.5cm,
        align=center},
    endpoint/.style={draw,circle,
        fill=gray, inner sep=0pt, minimum width=4pt},
    arrow/.style={->,thick,rounded corners},
    point/.style={circle,inner sep=0pt,minimum size=2pt,fill=black},
    skip loop/.style={to path={-- ++(#1,0) |- (\tikztotarget)}},
    every path/.style = {draw, -latex}
    ]
    \node (start) {Start};
    \node (h) [innernode]{History};
    \node (phi) [innernode, below=of h]{Abstraction $(\psi)$};
    \node (pi) [innernode, below=of phi]      {Policy $(\u \Pi)$};
    \node [outernode, align=left, inner sep=0.5cm, fill=none, fit=(h) (phi) (pi)] (agent) {};
    \node[below right, inner sep=3pt, fill=none] at (agent.north west) {Agent};
    \node[outernode, left=90pt of agent, fit=(agent.north)(agent.south), inner sep=0pt] (bin_env) {};
    \node[outernode, left=143pt of bin_env, fit=(agent.north)(agent.south), inner sep=0pt] (env) {};
    \node[below right, inner sep=0pt, fill=none, rotate=90, anchor=center] at (env) {Environment $(P)$};
    \node[below right, inner sep=0pt, fill=none, rotate=90, anchor=center] at (bin_env) {Seq. Environment $(\u P)$};
    \node[below= 2pt of bin_env] (D) {$D$};
    \node[endpoint, above= -2pt of env] (or_env) {};
    \node[endpoint, below= -2pt of env] (a_env) {};
    \node[endpoint, left=4pt of bin_env.north] (or_bin_env_in) {};
    \node[endpoint, right=4pt of bin_env.north] (or_bin_env_out) {};
    \node[endpoint, left=4pt of bin_env.south] (a_bin_env_out) {};
    \node[endpoint, right=4pt of bin_env.south] (a_bin_env_in) {};
    \node[endpoint, below= -2pt of agent] (a_agent) {};
    \node[endpoint, above= -2pt of agent] (or_agent) {};

    \path (a_agent) edge[arrow,bend left] node[below]{$x_t$} (a_bin_env_in);
    \path (a_bin_env_out) edge[arrow,bend left] node[below]{$a_k$} (a_env);
    \path (or_bin_env_out) edge[arrow, bend left] node[above]{$o_{t+1}r_{t+1}$} (or_agent);
    \path (or_env) edge[arrow, bend left] node[above]{$o_{k+1}r_{k+1}$} (or_bin_env_in);
    \path (or_agent) edge[arrow] node[right]{$o_{t+1}r_{t+1}$} (h);
    \path (h) edge[arrow] node[above=0.5pt,midway,name=h_phi,point]{} node[right]{$h_t$} (phi);
    \path (phi) edge[arrow] node[left]{$\psi(h_t)$} (pi);
    \path (pi) edge[arrow] node[above=0.5pt,midway,name=pi_a,point]{} node[left]{$x_t$} (a_agent);
    \path (pi_a) edge[arrow, skip loop=1.5cm] (h.east);
    \path (h_phi) edge[->, skip loop=-1.5cm, thick, rounded corners] (h.west);
    \end{tikzpicture}
    \caption{The agent-environment interaction through the sequentialization scheme. Note that the sequentialized-environment block (or a $\B$-ary ``mock'') manages two different time-scales $t$ and $k$. It is simply a buffer block which knows (de)coders $C$ and $D$ (see text for details). It buffers the input $\B$-ary actions and dispatches the buffered observation and reward. Once a complete $\B$-ary decision sequence is produced by the agent the $\B$-ary mock decodes the encoded actions to the original environment to continue the interaction loop. We can consider this sequentialized environment as a ``middle layer'' between the agent and the original environment.}
    \label{fig:bianary-interaction}
\end{figure}

We construct a history transformation function which maps the original histories from $\H$ to some sequentialized histories in $\u \H$, where
\beq
\u \H := \bigcup_{t=1}^\infty \underbrace{\O \times \R \times \B \times \dots \times \O \times \R \times \B}_{(t-1)-\text{step interactions}}\times\O\times\R
\eeq

It is worth noting that $\u \H$ does not (directly) contain any information about $\A$, cf.\ \Cref{eq:history}. The agent experiencing histories from this set would not be aware of $\A$.

\begin{definition}[History transformation function]
    A history transformation function is expressed with $g : \H \to \u \H$. The map is recursively defined for any history $h$, action $a$, next observation $o'$ and next reward $r'$ as
    \beq
    g(hao'r') := g(h)\v x_1or_\bot\v x_2or_\bot \dots \v x_do'r' \ \text{and} \ g(e) := e
    \eeq
    where $\v x := C(a)$, $o$ is the last observation of the history $h$, $e$ denotes the ``initial'' history\footnote{The initial history $e \in \O \times \R$ is similar to the initial state in standard RL. It is dispatched by environment without any input at the start.}, and $r_\bot$ is any fixed real-value. In this work, we assume\footnote{This assumption is not much of a restriction, if $r_\bot \notin \R$ then we can extend the reward space by $\R \cup \{r_\bot\}$.} that $r_\bot \in \R$ and $r_\bot = 0$.
\end{definition}

In the above construction, we chose to repeat the last observation $o$ in between the real interactions with the environment. This is not the only possible choice, we can choose a \emph{dummy} observation $o_\bot \in \O$ instead without affecting the claims. For brevity, we define $\v o$ and $\v r_\bot$ as $d$-dimensional constant vectors of $o$ and $r_\bot$, respectively. These vectors are then ``welded'' with $\v x$ to succinctly replace $x_1or_\bot\dots x_ior_\bot$ with $\vo{xor_\bot}_{\leq i}$. Note that we do not sequentialize the observations. It can be done, but we believe it is not useful in any way.

However, if the original process $P$ is an MDP, i.e.\ the most recent observation is the state of $P$, then there is another interesting option possible for $o_\bot$: extend the observation space $\O$ with $\O \times \cup_{i=0}^{d-1} \B^{i} =: \t \O$, and let the $\B$-ary mock dispatch an appropriate observation at every partial $\B$-ary decision vector $\v x_{<i}$ as:
\beq\label{eq:tobs}
\t o_\bot \coloneqq (o, \v x_1, \v x_2, \dots, \v x_{i-1}) \in \t \O
\eeq

It is not hard to show that with this sequentialization scheme the resultant sequentialized decision process is also an MDP over $\t \O$, see \Cref{pro:mdpismdp}.
By doing so, we end up with a state-space of size $|\t \O| = \abs{\O}(\abs{\A}-1) \leq \abs{\O \times \A}$. It is clear that this recasting of the original problem might not be very helpful for some Monte-Carlo like tree search methods, however, it might significantly improve the performance of some temporal-difference like algorithms,, e.g.\ Q-learning \cite{Watkins1992}, when applied to huge action-spaces.

Note that $g$ is injective, but it may not be a bijection. There are many sequentialized histories $\tau \in \u \H$ which are not mapped by $g$, i.e.\ there does not exist any history in $\H$ such that $\tau = g(h)$. For such sequentialized histories we define $g\inv(\tau) := \bot$, which formally allows us to talk about $g\inv$ without worrying about it being undefined on some arguments. The choice of this definition is not important. As a matter of fact, there is no particular significance of the symbol $\bot$. What makes this choice insignificant is the fact that the environment does not react until the agent has taken $d$ $\B$-ary actions. Some histories not covered by $g$ are such ``partial'' sequentialized histories where the actual environment does not react. Note that the rewards of the sequentialized setup are zero ($r_\bot := 0$) unless the sequentialized history length is a multiple of $d$, i.e.\ a ``complete'' sequentialized history. See \Cref{fig:example-2} for an example sequentialized/binarized setup for $\B = \SetB$ and $d=2$.

\begin{figure}[h]
    \centering
    \begin{tikzpicture}[level/.style={sibling distance=40mm/#1,thick, inner sep=2pt},square/.style={regular polygon,regular polygon sides=4,minimum size=1mm, inner sep=0,thick}]
        \node [circle,draw,inner sep=0, minimum size=12mm,thick] (z){$\tau$}
        child {node [square,draw, minimum size=12mm,thick] (a) {$\tau 0$}
            child {node [circle,draw,minimum size=12mm,thick] (b) {$\tau 00 o' r'$} edge from parent node[above left] {$x_2 = 0$}}
            child {node [circle,draw,minimum size=12mm,thick] (g) {$\tau 01 o' r'$} edge from parent node[above right] {$x_2 = 1$}}
            edge from parent node[above left] {$x_1 = 0$}
        }
        child {node [square,draw, minimum size=12mm,thick] (j) {$\tau 1$}
            child {node [circle,draw, minimum size=12mm,thick] (k) {$\tau 10 o' r'$} edge from parent node[above left] {$x_2 = 0$}}
            child {node [circle,draw, minimum size=12mm,thick] (l) {$\tau 11 o' r'$} edge from parent node[above right] {$x_2 = 1$}}
            edge from parent node[above right] {$x_1 = 1$}
        };
    \end{tikzpicture}
    \caption{A simple sequentialization/binarization example in a deterministic history-based process. The $\B$-ary/binary decisions are on the edges. For brevity, we do not represent $o_\bot$ and $r_\bot$ in the figure. For example, it should be apparent that $\tau 1 o_\bot r_\bot \equiv \tau 1$. The circles represent complete histories while the squares indicate partial histories.}
    \label{fig:example-2}
\end{figure}

\ifshort\else
TODO: Incorporate MH comment to introduce ``partial'' and ``complete'' histories.
\fi

Any agent which interacts with the environment through this sequentialized scheme would effectively experience the following sequentialized environment.
\begin{definition}[Sequentialized environment]\label{def:uP}
    For any $\B$-ary action $x \in \B$, sequentialized history $\tau \in \u \H$, and any partial extension $\vo{xor_\bot}_{<i}$ for $i \leq d$ the probability of receiving $o'$ and $r'$ as the next observation and reward is as follows:
    \bqan
    \u P(o'r'|\tau\vo{xor_\bot}_{<i} x)
    &:= \begin{cases}
        P(o'r'|ha) &\text{if } \tau\vo{xor_\bot}_{<i} xo'r' = g(hao'r') \\
        1 &\text{if } o'r' = or_\bot \\ &\phantom{}\text{ and } g^{-1}(\tau\vo{x or_\bot}_{<i}  x o' r') = \bot \\
        0 &\text{otherwise}
    \end{cases} \numberthis
    \eqan
    where $P$ is the actual environment.
\end{definition}

As highlighted before, the history $h$ can not be empty, so the above definition is well-defined.

The next step is to define the (action) value functions for this sequentialized agent-environment interaction. Let $\u \Pi$ be a policy such that $\u \Pi : \u \H \to \Dist(\B)$. Then, we define the (action) value functions similar to the original agent-environment interaction case. For any $\tau \in \u \H$ and $x \in \B$, the action-value function is defined as
\beq
\u Q^{\u \Pi}(\tau, x) := \sum_{o'r'} \u P(o'r'|\tau x) \left(r' + \lambda \u V^{\u \Pi}(\tau x o'r')\right)\label{eq:ube}
\eeq
where $\u V^{\u \Pi}(\tau) := \sum_{x \in \B} \u Q^{\u \Pi}(\tau, x)\u \Pi(x|\tau)$ and $\lambda$ is the discount factor of this sequentialized problem. Similar to the original optimal (action) value functions, $\u Q^*$ and $\u V^*$ denote the optimal (action) value functions of the sequentialized problem. The discount factor $\lambda$ plays an important role in trading off the size of the action-space with the planning horizon. Recall that the size of the original action-space is $\abs{\A} = \abs{\B}^d$. Therefore, if the agent has to make $d$ $\B$-ary decisions for each original action the discount factor after $d$ $\B$-ary actions should be $\g$, i.e.\ $\lambda^d = \g$. This implies that $\lambda = \g^{1/d} < 1$ as $\g < 1$ and $d < \infty$.
\ifshort\else See \Cref{fig:relationship-discount-factors} to better understand this relationship of $\lambda$ and $\g$.

\begin{figure}[!ht]
    \centering
    \includegraphics[width=0.4\textwidth]{example-image-c}
    \caption{The relationship between the discount factors $\g$ and $\lambda$.}
    \label{fig:relationship-discount-factors}
\end{figure}
\fi

This completes the problem setup. We have defined an agent $\u \Pi$ which only makes $\B$-ary decisions and reacts to sequentialized histories, see \Cref{fig:bianary-interaction}. As expected, the set of sequentialized histories $\u \H$ is blown out in comparison with $\H$. However, in \Cref{sec:esa}, we show that, under certain non-Markovian abstractions of either $\H$ or $\u \H$, this expansion is not ``harmful''.

\section{Sequentialized Processes and Values}\label{sec:bin-esa}

In this section we formally define the sequentialized process and related value functions. But first we need a couple of important quantities to state our main results. For any $\B$-ary vector $\v x \in \B^i$ where $i \leq d$, we define $\A(\v x) := \{ a \in \A : \v x \sqsubseteq C(a) \}$ a \emph{restricted} set of actions. Moreover, for any history $h$, an action-value function maximizer on this restricted set is defined as $\Pi^*(h,\v x) \in \argmax_{a \in \A(\v x)} Q^*(h, a)$.

We start off the section by noting an important relationship between the sequentialized process and the original process.

\begin{proposition}[Sequentialized Process]\label{pro:uPtoP}
    For any $o' \in \O, r' \in \R, h \in \H$ and $D(\v x) =: a \in \A$, the following relationship holds between $\u P$ and $P$:
    \beq
    \u P(o'r'|g(h)\vo{xor_\bot}_{<d} \v x_d) = P(o'r'|ha)
    \eeq
\end{proposition}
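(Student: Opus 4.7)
The plan is to verify that the conditioning string on the left-hand side is exactly the image under $g$ of the history $hao'r'$, so that the first branch of \Cref{def:uP} applies directly and yields $P(o'r'|ha)$.

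First I would invoke the bijection assumption: since $C$ and $D$ are mutual inverses between $\A$ and $\B^d$, the hypothesis $D(\v x) = a$ is equivalent to $C(a) = \v x$. This lets me translate freely between the $\B$-ary code $\v x$ appearing in the conditioning string and the encoding that the recursive definition of $g$ will produce when applied to $a$.

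Next I would unfold the recursive definition of $g$ on the extended history $hao'r'$. Letting $o$ denote the last observation of $h$ and writing $\v y := C(a)$, the recursion gives
\[
g(hao'r') \;=\; g(h)\,\v y_1 o r_\bot\,\v y_2 o r_\bot\,\cdots\,\v y_{d-1} o r_\bot\,\v y_d\,o' r'.
\]
By Step 1, $\v y = \v x$, so $g(hao'r') = g(h)\,\vo{xor_\bot}_{<d}\,\v x_d\,o' r'$ using the welded-vector shorthand defined in \Cref{sec:setup}.

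Finally I would apply \Cref{def:uP} with $\tau := g(h)$, $i := d$, and the final $\B$-ary symbol equal to $\v x_d$. The equation in the previous step is precisely the condition $\tau \vo{xor_\bot}_{<i} x o' r' = g(hao'r')$ that triggers the first branch of the definition, returning $P(o'r'|ha)$. This establishes the claim. The only step with any subtlety is matching the string $g(h)\vo{xor_\bot}_{<d}\v x_d o' r'$ produced by the recursion against the notational shorthand used in the statement, but once the welded-vector convention from \Cref{sec:setup} is unfolded the two are literally identical, so there is no real obstacle beyond careful bookkeeping.
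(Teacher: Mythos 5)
Your proposal is correct and follows essentially the same route as the paper, which simply evaluates \Cref{def:uP} at $i=d$ with $D(\v x)=a$ and notes that the first branch applies; you merely make explicit the bookkeeping (unfolding the recursion for $g$ and matching the welded-vector shorthand) that the paper leaves implicit. No issues.
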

\begin{proof}
    The proof trivially follows from \Cref{def:uP} by evaluating the definition at $i = d$ with $D(\v x_{<d} \v x_d) = a$.
\end{proof}

When the original process is an MDP then there exists a sequentialization scheme such that the sequentialized process is also Markovian.

\begin{theorem}[Sequentialization preserves Markov property]\label{pro:mdpismdp}
    If $P$ is an MDP over $\O$, and the observations from the $\B$-ary mock are $\t \O \coloneqq  \O \times \cup_{i=0}^{d-1} \B^{i}$, then $\u P$ is also an MDP over $\t \O$.
\end{theorem}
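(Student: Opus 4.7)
The plan is to verify the Markov property of $\u P$ with respect to the extended observation space $\t \O$ by a direct case split on whether the agent is in the middle of a $\B$-ary decision block or closing one. Throughout, let $\tilde o := (o, \v x_{<i}) \in \t \O$ denote the latest extended observation dispatched by the $\B$-ary mock just before the agent submits the $i$-th bit of the current block; immediately after a genuine environment step this reduces to $(o, \epsilon) \in \O \times \B^0$.

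First I would observe that, for any reachable sequentialized history $\tau$, the most recent $\t\O$-observation in $\tau$ is exactly this $\tilde o$; this is an unwinding of \Cref{eq:tobs} together with the recursive form of $g$. So to establish the MDP property it suffices to show that the kernel $\u P(\tilde o' r' \mid \tau, x)$ depends on $(\tilde o, x)$ alone.

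Case 1 (partial step, $i < d$): by the $\t o_\bot$ construction the mock is a deterministic buffer that emits $\tilde o' = (o, \v x_{<i}\, x)$ with reward $r_\bot = 0$, referencing no portion of $\tau$ beyond $\tilde o$, so the kernel trivially factors through $(\tilde o, x)$. Case 2 (closing step, $i = d$): setting $a := D(\v x_{<d}\, x)$ and invoking \Cref{pro:uPtoP},
\[
\u P(o' r' \mid \tau\, \vo{xor_\bot}_{<d}\, x) \;=\; P(o' r' \mid h, a).
\]
Since $P$ is an MDP over $\O$, the right-hand side collapses to $P(o' r' \mid o, a)$; and since $a$ is reconstructed from $(\v x_{<d}, x)$ which is already encoded in $(\tilde o, x)$, the whole expression depends only on $(\tilde o, x)$. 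The next extended observation $(o', \epsilon)$ is likewise determined by this transition. In both cases the kernel factors through $(\tilde o, x)$, which is the Markov property on $\t \O$.

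The main subtlety I expect is purely notational: one must be careful that the ``last observation'' feeding the $P$-kernel via \Cref{pro:uPtoP} is the underlying $o \in \O$ and not the extended $\tilde o$, and that the partial-decision suffix $\v x_{<i}$ is consistently bookkept inside $\tilde o$ rather than attached to the action stream. Once this accounting is set up cleanly, the remainder is a one-line invocation of the Markov property of $P$ on the closing step, with the intermediate steps being deterministic by construction.
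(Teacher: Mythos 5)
Your proposal is correct and follows essentially the same route as the paper: the paper likewise expands the definition of the sequentialized kernel over the augmented observations, handles the deterministic buffering steps and the genuine environment step together, and invokes the Markov property of the original process to reduce the conditioning to the last extended observation and the current decision symbol. Your explicit two-case split and the use of \Cref{pro:uPtoP} for the closing step are just a slightly more verbose presentation of the same argument.
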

\begin{proof}
    In the case of augmenting the observation space, the definition of $\u P$ becomes slightly more verbose than \Cref{def:uP} as $o_\bot$ is different for each partial history as defined in \Cref{eq:tobs}.
    \bqan
    \u P(\t o'r'|\tau\vo{x\t or_\bot}_{<i} x)
    &:= \begin{cases}
        P(o'r'|ha) &\text{if } \tau\vo{x\t or_\bot}_{<i} x\t o'r' = g(hao'r') \\
        1 &\text{if } \t o'r' = o\v x_{<i} xr_\bot \\ &\phantom{}\text{ and } g^{-1}(\tau\vo{x\t or_\bot}_{<i}  x \t o' r') = \bot \\
        0 &\text{otherwise}
    \end{cases}\numberthis
    \eqan
    for any $i \leq d$, $\t o, \t o' \in \t \O$, and $o \in \O$ is the most recent observation in $h$. At any $h$ the sufficient information is $o$, so $P(o'r'|ha) \equiv P(o'r'|oa)$. Therefore, from the above (expanded) definition of $\u P$, it is clear that:
    \beqn
    \u P(\t o'r'| \tau \vo{x\t or_\bot}_{<i}x) \equiv \u P(\t o'r'| o\v x_{<i}x) = \u P(\t o'r' | \t o x)
    \eeqn
    hence proves the proposition.
\end{proof}

The following proposition proves that the action-values of the ``partial'' histories of the sequentialized problem are related. This fact later helps us to show that these action-value functions respect the Q-uniform structure of the original environment.

\begin{proposition}[$\u Q^*$ $\max$-relationship]\label{prep:expandsion}
    For any sequentialized history $\tau \in \u \H$ such that $g\inv(\tau) \in \H$, the following holds
    \beq
    \max_{x \in \B} \u Q^*(\tau, x) = \lambda^{d-1} \max_{\v x \in \B^d} \u Q^*(\tau \vo{xor_\bot}_{<d}, \v x_d)
    \eeq
\end{proposition}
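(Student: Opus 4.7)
The plan is to unroll the Bellman equation~\eqref{eq:ube} exactly $d-1$ times, exploiting the fact that a ``partial'' sequentialized history triggers no real reaction from the environment: by \Cref{def:uP} (second case), the next observation-reward pair is the deterministic $(o, r_\bot) = (o, 0)$. Since the premise $g^{-1}(\tau) \in \H$ means $\tau$ is a complete sequentialized history ending in a genuine $(o, r')$, any prefix $\tau \vo{xor_\bot}_{<i} x$ with $i < d$ falls outside the image of $g$, so the $\u P$ transition is trivial.

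First I would establish a single-step collapse: for $\tau' \in \u \H$ of the form $\tau \vo{xor_\bot}_{<i}$ with $i < d$ and $x \in \B$,
\[
\u Q^*(\tau', x) \;=\; \sum_{o'r'} \u P(o'r' \mid \tau' x)\bigl(r' + \lambda \u V^*(\tau' x o' r')\bigr) \;=\; \lambda\, \u V^*(\tau' x o r_\bot) \;=\; \lambda \max_{\d x \in \B} \u Q^*(\tau' x o r_\bot, \d x),
\]
because $\u P(\cdot \mid \tau' x)$ is a point mass at $(o, 0)$ and $r_\bot = 0$. Then I would iterate this identity $d-1$ times, starting from $\u Q^*(\tau, x_1)$, producing one factor of $\lambda$ and one inner maximization per unrolling. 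After $k$ steps the partial history carried inside the action-value has grown by $k$ binary decisions and $k$ dummy observation-reward pairs, i.e.\ it equals $\tau \vo{xor_\bot}_{\le k}$ for some choice $\v x_{\le k+1}$; a short induction gives
\[
\u Q^*(\tau, x_1) \;=\; \lambda^{\,d-1} \max_{x_2, \ldots, x_d \in \B} \u Q^*\bigl(\tau \vo{xor_\bot}_{<d},\, x_d\bigr).
\]

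Finally I would take $\max_{x_1 \in \B}$ on both sides. Because $x_1$ appears only inside $\vo{xor_\bot}_{<d}$ on the right (and the $\max_{x_2,\ldots,x_d}$ commutes with $\max_{x_1}$), the right-hand side becomes $\lambda^{d-1}\max_{\v x \in \B^d} \u Q^*(\tau \vo{xor_\bot}_{<d}, \v x_d)$, which is exactly the claim.

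The only subtle point I anticipate is bookkeeping the indexing of $\vo{xor_\bot}_{<i}$ during the induction and making sure the condition ``$g^{-1}$ is $\bot$'' holds at every intermediate step so that the collapse of the Bellman sum is justified; this follows because once $g^{-1}(\tau) \in \H$, none of the strictly shorter-than-$d$ binary extensions of $\tau$ can lie in the image of $g$ (the environment reacts only on $d$-aligned boundaries). Beyond that, the proof is a mechanical geometric-series-style unrolling.
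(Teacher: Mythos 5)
Your proposal is correct and follows essentially the same route as the paper: both unroll the Bellman equation $d-1$ times, using the fact that on partial histories $\u P$ is a point mass at $(o, r_\bot)$ with $r_\bot = 0$, so each step contributes a factor of $\lambda$ and an inner maximization. Your explicit check that the intermediate prefixes lie outside the image of $g$ (so the second case of the definition of $\u P$ applies) is a detail the paper leaves implicit, but the argument is the same.
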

\begin{proof}
    The proof is straight forward. We successively apply the definition of $\u Q^*$.
    \bqan
    \max_{x_1 \in \B} \u Q^*(\tau, x_1)
    &= \max_{x_1 \in \B} \sum_{o'r'} \u P(o'r'|\tau x_1)\\ &\phantom{=} \left(r' + \lambda \max_{x_2 \in \B} \u Q^{*}(\tau x_1 o'r', x_2)\right)\\
    &\overset{(a)}{=} \lambda \max_{x_1 \in \B} \max_{x_2 \in \B} \u Q^*(\tau x_1 or_\bot, x_2)\\
    &\vdotswithin{=} \text{(continue unrolling for $d-1$-steps)}\\
    &= \lambda^{d-1} \max_{\v x \in \B^d} \u Q^*(\tau \vo{xor_\bot}_{<d}, \v x_d) \numberthis
    \eqan
    where $(a)$ follows from the definition of $\u P$ and the fact that $r' = r_\bot = 0$ when $\u P \neq 0$.
\end{proof}

Now, using \Cref{prep:expandsion} we can prove a relationship between the action-value functions of the actual environment and the sequentialized environment.

\begin{lemma}[$\u Q^*$ $\v x$-relationship]\label{lem:q-relation}
    For any history $h$ with the corresponding sequentialized history $\tau = g(h)$ and $\B$-ary decision vector $\v x \in \B^d$, the following holds for any $i \leq d$.
    \beqn
    \u Q^*(\tau\vo{xor_\bot}_{<i}, \v x_i) = \g^{\frac{d-i}{d}} Q^*(h,\Pi^*(h,\v{x}_{\leq i}))
    \eeqn
\end{lemma}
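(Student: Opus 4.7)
The plan is to establish the lemma by reverse induction on $i$ from $i = d$ down to $i = 1$, carried out in parallel with the auxiliary value identity $\u V^*(g(h')) = \lambda^{d-1} V^*(h')$ for every history $h' \in \H$.

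First I would handle the \emph{inductive step} for $i < d$. Since any partial sequentialized history $\tau \vo{xor_\bot}_{<i} \v x_i$ has length not divisible by $d$, \Cref{def:uP} forces the next observation-reward to be the deterministic $(o, r_\bot) = (o, 0)$. The Bellman recursion \eqref{eq:ube} therefore collapses to
\beqn
\u Q^*(\tau \vo{xor_\bot}_{<i}, \v x_i) \,=\, \lambda \max_{y \in \B} \u Q^*(\tau \vo{xor_\bot}_{\leq i}, y).
\eeqn
By the induction hypothesis at index $i+1$ (applied with any decision vector $\tilde{\v x}$ satisfying $\tilde{\v x}_{\leq i} = \v x_{\leq i}$ and $\tilde{\v x}_{i+1} = y$), each inner term equals $\g^{(d-i-1)/d} Q^*(h, \Pi^*(h, \v x_{\leq i} y))$. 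Multiplying by $\lambda = \g^{1/d}$ produces the desired prefactor $\g^{(d-i)/d}$. To close the step, I would invoke the disjoint partition $\A(\v x_{\leq i}) = \bigcup_{y \in \B} \A(\v x_{\leq i} y)$ together with the argmax definition of $\Pi^*(h, \cdot)$ to conclude that $\max_{y \in \B} Q^*(h, \Pi^*(h, \v x_{\leq i} y)) = Q^*(h, \Pi^*(h, \v x_{\leq i}))$.

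For the \emph{base case} $i = d$, the complete decision $\v x$ pins down $a := D(\v x)$, so $\A(\v x) = \{a\}$ and $\Pi^*(h, \v x) = a$. Combining \Cref{pro:uPtoP} with \eqref{eq:ube} yields
\beqn
\u Q^*(\tau \vo{xor_\bot}_{<d}, \v x_d) \,=\, \sum_{o' r'} P(o'r'|ha) \bigl( r' + \lambda \u V^*(g(hao'r')) \bigr),
\eeqn
which matches $Q^*(h,a)$ from \eqref{eq:obe} precisely when the auxiliary identity holds at every successor $h' = hao'r'$. The auxiliary identity itself would be delivered by applying \Cref{prep:expandsion} at $g(h')$, followed by the base case at $h'$ and the observation $\max_{\v z \in \B^d} Q^*(h', D(\v z)) = V^*(h')$ (since $D$ is a bijection onto $\A$).

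The hard part will be closing the apparent circularity: the base case at $h$ relies on the auxiliary identity at each successor $h'$, which in turn invokes the base case at $h'$. I would resolve this by appealing to the uniqueness of the Bellman fixed point. The sequentialized Bellman operator is a $\lambda$-contraction on bounded functions (since $\lambda = \g^{1/d} < 1$ and $\R$ is bounded), so it admits a unique fixed point. Both $\u Q^*$ (by definition) and the candidate Q-function formed from the right-hand side of the lemma (extended consistently across all sequentialized histories via the deterministic partial transitions) satisfy this Bellman equation, so they must coincide. Equivalently, unrolling the recursion $T$ times leaves a tail of order $\lambda^T$ that vanishes as $T \to \infty$, removing the circularity.
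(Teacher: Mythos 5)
Your proof is correct and follows essentially the same route as the paper's: the $i=d$ case is settled by a Bellman-fixed-point uniqueness argument (the paper shows the map $(h,a)\mapsto \u Q^*(g(h)\vo{xor_\bot}_{<d},\v x_d)$ satisfies the original OBE and invokes uniqueness there, which is the mirror image of your argument on the sequentialized side), and the $i<d$ cases follow by unrolling the deterministic zero-reward partial transitions. Your one-step induction using the partition $\A(\v x_{\leq i}) = \bigcup_{y\in\B}\A(\v x_{\leq i}y)$ is just a reorganization of the paper's direct $(d-i)$-step unrolling to $\max_{a\in\A(\v x_{\leq i})}Q^*(h,a)$.
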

\begin{proof}
    Before we prove the general result, we show that the result holds for $i = d$, i.e.\ the sequentialized problem has same optimal action-values at the ``real'' decision steps. Note that $\Pi^*(h,\v x_{\leq d}) = D(\v x)$. Let $\v x := C(a)$ and $\tau := g(h)$. Using the fact that $r_\bot = constant = 0$, we get
    \bqan
    f_{r_\bot}(h,a)
    &:= \u Q^*(\tau \vo{xor_\bot}_{< d}, \v x_d) \\
    &\overset{(a)}{=} \sum_{o'r'} \u P(o'r'|\tau \vo{xor_\bot}_{< d}\v x_d) \left(r' + \lambda \max_{x'} \u Q^{*}(\tau \vo{xor_\bot}_{< d} \v x_d o'r', x')\right) \\
    &\overset{(b)}{=} \sum_{o'r'} P(o'r'|ha) \left(r' + \lambda \max_{x'} \u Q^{*}(\tau \vo{xor_\bot}_{< d} \v x_d o'r', x')\right) \\
    &\overset{(c)}{=} \sum_{o'r'} P(o'r'|ha) \left(r'  + \lambda^d \max_{\v x' \in \B^d} \u Q^{*}(\tau \vo{xor_\bot}_{< d} \v x_d o'r' \vo{xor_\bot}'_{<d}, \v x'_d)\right) \\
    &\overset{(d)}{=} \sum_{o'r'} P(o'r'|ha) \left(r' + \g \max_{a' \in \A} f_{r_\bot}(hao'r',a')\right)\numberthis\label{eq:obe-2}
    \eqan
    where $(a)$ is just \Cref{eq:ube} with the optimal policy, $(b)$ follows by \Cref{pro:uPtoP}, $(c)$ is given by \Cref{prep:expandsion}, $(d)$ is true by rearranging the argument, the definition of $f_{r_\bot}$ and by using the relation $\lambda^d = \g$. Note that \Cref{eq:obe-2} is the OBE of the original problem, see \Cref{eq:obe}. The solution of the OBE is unique \cite{Lattimore2014b}, hence $f_{r_\bot}$ is indeed $Q^*$.

    Having proved the claim for $i = d$, we show that it also holds for any $i < d$.
    \bqan
    \u Q^*(\tau\vo{xor_\bot}_{<i}, \v x_i)
    &\overset{(a)}{=} \sum_{o'r'} \u P(o'r'|\tau \vo{xor_\bot}_{< i}\v x_i) \left(r' + \lambda \max_{x_{i+1}} \u Q^{*}(\tau \vo{xor_\bot}_{< i} \v x_i o'r', x_{i+1})\right) \\
    &\overset{(b)}{=} \lambda \max_{x_{i+1}} \u Q^{*}(\tau \vo{xor_\bot}_{< i} \v x_i or_\bot, x_{i+1}) \\
    &\vdotswithin{=} \text{(continue unrolling for $d-i-1$-steps)}\\
    &\overset{}{=} \lambda^{d-i} \max_{x_{i+1}} \dots \max_{x_{d}} \u Q^{*}(\tau \vo{xor_\bot}_{< i} \v x_i or_\bot x_{i+1}or_\bot \dots x_{d-1}or_\bot, x_d) \\
    &\overset{(c)}{=} \lambda^{d-i} \max_{a \in \A(\v x_{\leq i})} Q^*(h, a) \numberthis
    \eqan
    where, again $(a)$ is \Cref{eq:ube} with the optimal policy, $(b)$ follows from the definition of $\u P$ and $r_\bot = 0$, and $(c)$ is true from the fact that the claim holds for $i=d$ and the maximum is over the restrictive set of actions.
\end{proof}

What we have proven so far is that the sequentialization scheme produces action-value functions which (at the ``partial'' histories) are rescaled versions of the original action-value function. They agree with the original $Q^*$ at the decision points (at the ``complete'' histories) where the sequentialized policy $\u \Pi$ completes an action code.

We also show that a similar relationship as proved in \Cref{lem:q-relation} holds for a fixed policy $\u\Pi$. However, we use a different proof method for the following lemma. Note that $\u \Pi$ induces a policy $\b \Pi$ on the original environment, which can trivially be expressed as follows:
\beq\label{eq:bpi}
\b \Pi(a|h) := \prod_{i=1}^d \u \Pi(\v x_i | \tau \vo{xor_\bot}_{<i}) =: \u \Pi(\v x | \tau)
\eeq
for any $a = D(\v x)$ and $\tau = g(h)$.

\begin{lemma}[$\u Q^{\u\Pi}$ $\v x$-relationship]\label{lem:fixed-policy}
    For any arbitrary policy $\u \Pi$ the following relationship is true:
    \beq
    \u Q^{\u\Pi}(\tau \vo{xor_\bot}_{<d}, \v x_d) = Q^{\b \Pi}(h, D(\v x))
    \eeq
    for any history $\tau = g(h)$ and $\v x \in \B^d$.
\end{lemma}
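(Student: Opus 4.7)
The plan is to show that the function $f(h,a) := \u Q^{\u\Pi}(g(h)\vo{xor_\bot}_{<d}, \v x_d)$ (with $\v x := C(a)$) satisfies the same fixed-policy Bellman equation as $Q^{\b\Pi}$ on the original process, and then invoke uniqueness of that equation's solution (the standard $\g$-contraction argument on bounded-reward histories, as already used in the proof of \Cref{lem:q-relation}). Since no maximisation is involved, the proof does not mirror \Cref{lem:q-relation} directly: instead of optimising we must carry the fixed policy $\u\Pi$ through the unroll and watch it aggregate into $\b\Pi$.

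First I would apply \Cref{eq:ube} once at $f(h,a)$. By \Cref{pro:uPtoP}, the transition $\u P(o'r' \mid g(h)\vo{xor_\bot}_{<d}\v x_d)$ equals $P(o'r' \mid ha)$, producing $r'$ plus $\lambda\, \u V^{\u\Pi}(g(h)\vo{xor_\bot}_{<d}\v x_d o'r')$. Writing $h' := hao'r'$ so that $g(h)\vo{xor_\bot}_{<d}\v x_d o'r' = g(h')$, I then unroll $\u V^{\u\Pi}$ at $g(h')$ across the $d$ partial decisions that precede the next real interaction. At each partial history $g(h')\vo{x'o'r_\bot'}_{<i}$ with $1 \leq i < d$, \Cref{def:uP} forces $\u P$ to be the Dirac mass on $(o', r_\bot)$; hence the sum over the next observation/reward collapses, contributing exactly one factor of $\lambda$ (with $r_\bot = 0$, no reward is accumulated) and a sum over the $\B$-ary decision $\v x'_i$ weighted by $\u\Pi(\v x'_i \mid g(h')\vo{x'o'r_\bot'}_{<i})$.

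After all $d$ partial unrolls, the accumulated discount is $\lambda^d = \g$, the telescoped product of policy weights equals $\prod_{i=1}^d \u\Pi(\v x'_i \mid g(h')\vo{x'o'r_\bot'}_{<i}) = \b\Pi(a' \mid h')$ with $a' = D(\v x')$ by \Cref{eq:bpi}, and the remaining term is $\u Q^{\u\Pi}(g(h')\vo{x'o'r_\bot'}_{<d}, \v x'_d) = f(h', a')$. Collecting everything yields
\[ f(h,a) \;=\; \sum_{o'r'} P(o'r' \mid ha)\Bigl(r' + \g \sum_{a' \in \A} \b\Pi(a' \mid h')\, f(h', a')\Bigr), \]
which is precisely the fixed-policy Bellman equation satisfied by $Q^{\b\Pi}$ on $(P, \b\Pi, \g)$.

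Finally, because $\R$ is finite and $\g < 1$, the fixed-policy Bellman operator on bounded functions over $\H \times \A$ is a $\g$-contraction in the supremum norm, so its fixed point is unique; this forces $f \equiv Q^{\b\Pi}$ and gives the claim for $i = d$ in the form stated. The main obstacle is the bookkeeping in the unroll: one has to check carefully that each of the $d$ partial-step collapses pins the observation to $o$ (or to $o\v x'_{<i}$ in the augmented-observation MDP variant of \Cref{pro:mdpismdp}), contributes exactly one $\u\Pi$ factor in the correct argument, and preserves a single clean factor of $\lambda$, so that the product telescopes to $\b\Pi(a' \mid h')$ and no stray policy weight or discount is dropped.
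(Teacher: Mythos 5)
Your proof is correct, but it takes a genuinely different route from the paper's. You show that $f(h,a) := \u Q^{\u\Pi}(g(h)\vo{xor_\bot}_{<d}, \v x_d)$ satisfies the fixed-policy Bellman equation of $(P,\b\Pi,\g)$ by a full $d$-step unroll in which the partial-history transitions collapse to Dirac masses, the policy weights telescope into $\b\Pi(a'|h')$ via \Cref{eq:bpi}, and the discounts accumulate to $\lambda^d=\g$; you then conclude by uniqueness of the fixed point of the $\g$-contraction. That is essentially the strategy the paper uses for \Cref{lem:q-relation} (the optimal-value analogue), transplanted to the fixed-policy operator, and the bookkeeping you describe does go through. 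The paper instead deliberately ``uses a different argument'': it first establishes the value-function identity $\u V^{\u\Pi}(\tau) = \lambda^{d-1}V^{\b\Pi}(h)$ (\Cref{eq:v-v}) by writing out the $(\text{expected-reward},\text{discount})$ sequence generated from $\tau$ and rescaling the discounts by $\lambda^{1-d}$, and then needs only a single Bellman expansion plus \Cref{pro:uPtoP} to read off the claim, with no appeal to uniqueness. The trade-off: your argument is more mechanical and arguably more rigorous (the paper's sequence-rescaling step is informal), but it requires the contraction/uniqueness machinery and leaves the intermediate identity \Cref{eq:v-v} implicit, whereas the paper's route produces \Cref{eq:v-v} as an explicit byproduct that is reused later in the proof of \Cref{lem:uplift}; if you adopt your version, you should still extract that value-function identity (e.g.\ by summing your $Q$-identity against $\b\Pi$ and performing one partial unroll) since the later theorem depends on it.
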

\begin{proof}
    Before we prove the main result of the lemma, we show that the following relationship holds for the value-functions of the sequentialized and the original environment:
    \beq\label{eq:v-v}
    V^{\u\Pi}(\tau) = \lambda^{d-1}V^{\b \Pi}(h)
    \eeq
    for any $\tau = g(h)$.
    We use a different argument than \Cref{lem:q-relation} to prove the above statement. Lets imagine the sequentialized environment is at the history $\tau = g(h)$. The agent starts to follow the policy $\u\Pi$. The following is the $(\text{expected-reward}, \text{discount-factor})$ sequence it generates from this history.
    \bqan
    &(0,\lambda^0),(0,\lambda^1),\dots,(0,\lambda^{d-2}),(\b r,\lambda^{d-1}),\\
    &(0,\lambda^d),(0,\lambda^{d+1}), \dots,(0,\lambda^{2d-2}),(\b r',\lambda^{2d-1}),\\ &(0,\lambda^{2d}), \dots
    \eqan
    where $\b r$ is the expected reward. The sum of the reward part of the above sequence returns $\u V^{\u\Pi}(\tau)$.
    Now, if we re-scale the discount part of the above sequence by $\lambda^{d-1}$ we get $V^{\b\Pi}(h)$ as the sum of the reward part.

    \bqan
    &(0,\lambda^{1-d}),(0,\lambda^{2-d}),\dots,(0,\lambda^{-1}),(\b r,\lambda^{0}),\\
    &(0,\lambda^1),(0,\lambda^{2}), \dots,(0,\lambda^{d-1}),(\b r',\lambda^{d}), \\
    &(0,\lambda^{d+1}), \dots
    \eqan
    which proves \Cref{eq:v-v} when $\lambda^d = \gamma$. Now, let $a := D(\v x)$.
    \bqan
    Q^{\u\Pi}(\tau\vo{xor_\bot}_{<d},\v x_d)
    &= \sum_{o'r'} \breve{P}(o'r'|\tau\vo{xor_\bot}_{<d}\v x_d) \left(r' + \lambda V^{\breve{\Pi}}(\tau\vo{xor_\bot}_{<d}\v x_do'r')\right) \\
    &\overset{(a)}{=} \sum_{o'r'} P(o'r'|ha) \left(r' + \lambda V^{\breve{\Pi}}(\tau\vo{xor_\bot}_{<d}\v x_do'r')\right) \\
    &\overset{\eqref{eq:v-v}}{=} \sum_{o'r'} P(o'r'|ha) \left(r' + \lambda^d V^{\b\Pi}(hao'r')\right) \\
    &= \sum_{o'r'} P(o'r'|ha) \left(r' + \g V^{\b\Pi}(hao'r')\right) = Q^{\b\Pi}(h,D(\v x))
    \eqan
    where $(a)$ is due to \Cref{pro:uPtoP}.
\end{proof}

The following theorem proves the usefulness of our sequentialization framework. We show that the optimal policy of the sequentialized environment is also optimal in the original environment when it is lifted back using the decoding function $D$.

\begin{theorem}[Sequentialization preserves $\eps$-optimality]\label{lem:uplift}
    Any $\lambda^{d-1}\eps$-optimal policy of the sequentialized environment is $\eps$-optimal in the original environment.
\end{theorem}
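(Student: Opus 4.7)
The plan is to reduce the claim to subtracting two value-matching identities, so that essentially no new computation is required; all the work has already been done in \Cref{prep:expandsion,lem:q-relation,lem:fixed-policy}.

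First, I would nail down the optimal-value version of the correspondence between the two processes: for any $h \in \H$ with $\tau = g(h)$,
\[
\u V^*(\tau) \;=\; \lambda^{d-1}\, V^*(h).
\]
This is essentially immediate. Write $\u V^*(\tau) = \max_{x \in \B} \u Q^*(\tau,x)$, apply \Cref{prep:expandsion} to peel off the $d-1$ intermediate ``partial'' layers and pull out a factor $\lambda^{d-1}$, then invoke \Cref{lem:q-relation} at $i=d$ to identify $\u Q^*(\tau\vo{xor_\bot}_{<d},\v x_d) = Q^*(h,D(\v x))$. Since $D:\B^d \to \A$ is a bijection, the outer maximum over $\v x \in \B^d$ equals $V^*(h) = \max_{a \in \A} Q^*(h,a)$.

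Next, I would invoke the fixed-policy analogue already proved as \eqref{eq:v-v} inside the proof of \Cref{lem:fixed-policy}: for any sequentialized policy $\u\Pi$ with its lift $\b\Pi$ defined by \eqref{eq:bpi},
\[
\u V^{\u\Pi}(\tau) \;=\; \lambda^{d-1}\, V^{\b\Pi}(h).
\]
Subtracting the two identities and applying the hypothesis $\u V^*(\tau) - \u V^{\u\Pi}(\tau) \le \lambda^{d-1}\eps$ at $\tau = g(h)$ yields $\lambda^{d-1}\bigl(V^*(h) - V^{\b\Pi}(h)\bigr) \le \lambda^{d-1}\eps$. Dividing by the strictly positive factor $\lambda^{d-1}$ gives $V^*(h) - V^{\b\Pi}(h) \le \eps$ for every $h \in \H$, which is the desired $\eps$-optimality of the lifted policy $\b\Pi$ in the original environment.

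The proof really has no hard step; the design of the sequentialization machinery in \Cref{sec:bin-esa} has already absorbed all the difficulty. The only thing to keep honest is that the $\lambda^{d-1}\eps$-optimality hypothesis is a uniform statement over $\u\H$, and that I only need to evaluate it on the image $g(\H) \subseteq \u\H$ to draw conclusions over all $h \in \H$; since $g$ is injective and $\H$ is precisely the domain over which the original OBE is posed, this covers every original history without leaving any gap, and the constant $\lambda^{d-1}$ is exactly the rescaling factor required to balance the two sides.
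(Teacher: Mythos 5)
Your proposal is correct and follows essentially the same route as the paper's own proof: establish $\u V^*(\tau) = \lambda^{d-1}V^*(h)$ via \Cref{prep:expandsion} and \Cref{lem:q-relation} at $i=d$, pair it with the fixed-policy identity \eqref{eq:v-v} from the proof of \Cref{lem:fixed-policy}, and subtract. The only cosmetic difference is that you make explicit the (correct) observation that the hypothesis need only be evaluated on the image $g(\H)$, which the paper leaves implicit.
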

\begin{proof}
    Let $\u \Pi$ be an $\eps'$-optimal policy of the sequentialized environment, where $\eps' := \lambda^{d-1}\eps$. It implies the following:
    \beq
    \u V^*(\tau\vo{xor_\bot}_{<i}) - \u V^{\u \Pi}(\tau\vo{xor_\bot}_{<i}) \leq \eps'
    \eeq
    for any complete sequentialized history $\tau = g(h)$ and $\v x \in \B^{i-1}$ where $i \leq d$. Especially, we are interested in the case when $i=1$, i.e.\ values at the complete histories.
    \beq\label{eq:near-opt}
    \u V^*(\tau) - \u V^{\u \Pi}(\tau) \leq \eps'
    \eeq
    With simple algebra, we can show that the following relationship holds for the optimal policies of the sequentialized and original processes:
    \bqan
    \u V^*(\tau)
    &\overset{(a)}{=} \max_{x} \u Q^*(\tau, x) \\
    &\overset{(b)}{=} \lambda^{d-1} \max_{\v x \in \B^d} \u Q^*(\tau \vo{xor_\bot}_{<d}, \v x_d) \\
    &\overset{(c)}{=} \lambda^{d-1} \max_{\v x \in \B^d} Q^*(h, D(\v x)) = \lambda^{d-1} V^*(h) \numberthis\label{eq:opt-v-v}
    \eqan
    where $(a)$ is the definition of the value function, $(b)$ holds due to \Cref{prep:expandsion}, and $(c)$ is true by applying \Cref{lem:q-relation} for $i = d$.

    Now, by simply using \Cref{eq:v-v} and \Cref{eq:opt-v-v}, we can prove the claim.
    \beq
    V^*(h) - V^{\b \Pi}(h) \overset{(a)}{=} \lambda^{1-d}\left(\u V^*(\tau) - \u V^{\u \Pi}(\tau)\right) \overset{\eqref{eq:near-opt}}{\leq} \eps
    \eeq
    for any $\tau = g(h)$, where $(a)$ is due to \Cref{eq:v-v} and \Cref{eq:opt-v-v}.
\end{proof}

We are done formally defining the setup. In the next section we put everything together under the context of ESA to establish the validity of our sequentialization setup.

\section{Extreme State Aggregation}\label{sec:esa}

In the previous sections, we formalized the GRL problem with a sequentialized action-space. A GRL agent keeps the history of its interaction to decide the next action. The history grows with time but even worse is that, without more assumptions and/or abstractions, no history ever repeats \cite{Hutter2009}. This is a unique characteristic of the history-based setup which sets it apart from the standard RL \cite{Sutton2018}. It enables the GRL framework to cover from the extreme case of unique histories to the most restrictive scenarios of bandits. However, without abstractions, a GRL agent which assumes every history is unique is more of a theoretical artifact than a realizable algorithm. It is critical to note that our sequentialization scheme results in just like any other GRL agent. It also requires an abstraction map (or further structural assumptions) to provide an implementable algorithm. Usually, one starts by assuming some structure on the history set(s). After reviewing some, we will argue against all of them.

On one extreme we have unique histories and on the other end, typically, the environment distribution is assumed to be Markovian, i.e.\ for any $h$ and $a$, $P(o'r'|ha) \equiv P(o'r'|oa)$ for all $o'r'$ where $o$ is the most recent observation in $h$ \cite{Sutton2018}. This means that histories with the same most recent observation are members of the same class (or state).
This assumption provides a lot of structure on $\H$. The value functions become functions of the most recent observations. Note that \citet{Hutter2016} defines the Markovian assumption directly on histories, which is a bit weaker than what we have stated above.

Unfortunately, the Markovian assumption is too strong to be used in many real-world problems. We do not interact with the world based on just our recent observations. As general agents, we keep ``relevant'' historical events in memory to plan better in the future, sometimes optimally. Apart from some ``toy'' examples and (well-defined) games \cite{Mnih2015,Silver2016,Silver2018}, this assumption demands too much structure on the history space. So, what other assumption can we make? We can keep the Markovian structure but can weaken the assumption, significantly, by assuming that the agent is not able to observe the state directly. The agent may require a sufficiently long history of interaction to discern the \emph{hidden} Markovian state of the environment \cite{Kaelbling1998}. The class of problems this assumption models is known as partially observable Markov decision problems (POMDP). Almost all problems we care about can be modeled as POMDPs. However, POMDP solution methods are very demanding and the optimal behavior is not guaranteed to be learnable in general \cite{Pendrith1998}. We do not address this non-MDP class any further.

We focus on other important quantities in GRL formulation, e.g.\ $\Pi^*, V^*$, and $Q^*$, and make no direction assumption on $P$. This has been a subject of many works; \cite{Li2006,Abel2016} considered a unified abstraction framework by mapping states similar in value, \citet{Hutter2016} subsumed the previous work by considering the GRL setup, and \citet{Majeed2019} extended the work to state-action abstractions. The value functions provide natural criteria to group histories. The resultant structure can be non-Markovian. Such non-MDP abstractions have many benefits over Markovian reductions. The resultant state-space ($\cong$ the set of groups of histories) can be significantly smaller with these abstractions than the Markovian counterparts. One advantage of using such abstractions, as compared to POMDPs, is the guarantee of the optimal behavior being a function of states, which helps the learning in many problems that were traditionally not considered learnable \cite{Majeed2018}.

However, the most remarkable aspect of such non-MDP abstractions is that there may exist an upper bound on the required number of states \emph{uniformly} for any problem, as it is the case in ESA \cite{Hutter2016}.
The idea is to group histories together which have \emph{similar} optimal action-values $Q^*$. Since $Q^*$ is a bounded real function (which is the case as $\R$ is bounded), we can potentially upper bound the required number of states by lumping together histories by discretization of the action-value function. In this work, we are primarily interested in non-MDP abstractions of the following type.

\begin{definition}[$\eps$-Q-uniform abstraction]
    An abstraction function $\phi: \H \to \S$ is an $\eps$-Q-uniform abstraction if for any $h, \d h \in \H$ and all $a \in \A$ we have
    \bqan
    &\left(\phi(h) = \phi(\d h)\right)\implies \abs{Q^*(h,a) - Q^*(\d h, a)} \leq \eps
    \eqan
    where $\S$ is the set of states\footnote{We consider a finite abstract set of states, but the underlying set of states ($\cong$ history-space) is (allowed to be) infinite. Note also that we consider \emph{approximate} Q-uniform aggregations which result into a finite abstract state.} of the abstraction.
\end{definition}

In ESA, the agent's policy is (constrained to be only) a function of the states. Although these states do not exhibit Markovian dynamics, the agent can ``pretend'' that the abstract process is Markovian. This structure provides a surrogate-MDP whose optimal policy is $\eps$-optimal in the original environment.

The $\eps$-Q-uniform, non-MDP abstractions lead to the following important result due to \citet{Hutter2016}. We only state the result without a proof for the closure of exposition, see \citet{Hutter2016} for more details about ESA and proofs.

In the following theorems we assume that the rewards are bounded in the unit interval, i.e.\ $\R \subseteq [0,1]$. This is done for brevity, and it is not a necessary condition. The rescaling of the rewards does not affect the decision-making process in (G)RL. In general, let the range of the rewards be $R := \max \R - \min \R$. Then, the scalars in the nominators of \Cref{thm:esa,thm:bin-esa} are replaced by $2R$ and $4R^2$ respectively.

\begin{theorem}[ESA {\cite[Theorem 11]{Hutter2016}}] \label{thm:esa}
    For every environment $P$ there exists a reduction $\phi$ and a surrogate-MDP whose optimal policy\footnote{See \citet{Hutter2016} of how to learn this policy, the surrogate-MDP, $Q^*$, and $\phi$.} is an $\eps$-optimal policy for the environment. The size of the surrogate-MDP is bounded (uniformly for any $P$) by\footnote{The 2 instead of a 3 in the original theorem is a trivial improvement by removing the grid point at 0 in the construction.}
    \beqn
    \abs{\S} \leq \left(\frac{2}{\eps (1-\g)^3}\right)^{\abs{\A}}
    \eeqn
\end{theorem}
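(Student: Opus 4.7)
The plan is to construct $\phi$ by discretizing the range of $Q^*$ and then building a surrogate MDP on the resulting partition whose optimal policy, lifted back to $\H$, is $\eps$-optimal in $P$. Since $\R \subseteq [0,1]$, we have $Q^*(h,a) \in [0, 1/(1-\g)]$ for every $(h,a) \in \H \times \A$. Fix a grid spacing $\delta > 0$ to be tuned at the end. For each $h$, form the action-indexed vector $\v q(h) := (Q^*(h,a))_{a \in \A} \in [0, 1/(1-\g)]^{\abs{\A}}$ and let $\phi(h)$ be the index of the axis-aligned cell of side $\delta$ containing $\v q(h)$. Since the cube decomposes into at most $\lceil 1/((1-\g)\delta)\rceil^{\abs{\A}}$ such cells, this gives $\abs{\S} \leq \lceil 1/((1-\g)\delta)\rceil^{\abs{\A}}$, and by construction $\phi$ is $\delta$-Q-uniform.

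Next, I would assemble the surrogate MDP on $\S := \phi(\H)$ by selecting, for every $s \in \S$, a representative history $h_s \in \phi^{-1}(s)$, and setting transitions $\t P(s' | s, a) := \sum_{o'r' : \phi(h_s a o'r') = s'} P(o'r' | h_s, a)$ and rewards $\t r(s,a) := \sum_{o'r'} r'\, P(o'r' | h_s, a)$. Let $\t Q^*$ and $\t \Pi^*$ be the optimal action-value function and an optimal policy of this finite MDP, and lift $\t \Pi^*$ to $\H$ via $\Pi(h) := \t \Pi^*(\phi(h))$.

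The main obstacle is showing that $\Pi$ is $\eps$-optimal in $P$, because $\phi$ is not required to induce Markovian dynamics on $\S$, so $\t P$ is only a ``representative'' surrogate and $\t Q^*$ does not literally equal $Q^*$ pulled back through $\phi$. My plan is a two-step error-propagation argument. First, I would establish a sup-norm bound of the form $\sup_{h, a} \abs{Q^*(h,a) - \t Q^*(\phi(h), a)} \leq \delta/(1-\g)$, by viewing $Q^*$ and $\t Q^*$ as fixed points of the history-based and surrogate Bellman operators respectively: $\delta$-Q-uniformity controls the per-step discrepancy between the two operators applied to the same test function (bounding it by $\delta$ uniformly), and the $\g$-contraction of the Bellman operator then amplifies the geometric series into the factor $1/(1-\g)$. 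Second, a standard argmax-swap argument converts this action-value error into a policy loss with one more $1/(1-\g)$ factor, yielding $V^*(h) - V^\Pi(h) \leq 2\delta/(1-\g)^2$ for every $h$.

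Finally, setting $\delta := \eps(1-\g)^2/2$ makes $\Pi$ exactly $\eps$-optimal, and substituting this $\delta$ into the state count gives $\abs{\S} \leq \lceil 2/(\eps(1-\g)^3) \rceil^{\abs{\A}}$, which is the claimed uniform bound $(2/(\eps(1-\g)^3))^{\abs{\A}}$ up to the ceiling rounding (and the footnote's improvement from $3$ to $2$ comes from discarding the spurious grid point at $0$, which never arises as a $Q$-value when one bucket already covers the empty tail).
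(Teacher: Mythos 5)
The paper does not actually prove this statement: it is imported verbatim from \citet{Hutter2016} (``We only state the result without a proof \dots see \citet{Hutter2016} for \dots proofs''), so there is no in-paper proof to compare against line by line. Your sketch is, however, a correct reconstruction of the standard ESA argument and matches the cited proof in all essentials: discretize the vector $(Q^*(h,a))_{a\in\A}\in[0,1/(1-\g)]^{\abs{\A}}$ into a $\delta$-grid to get a $\delta$-Q-uniform $\phi$ with at most $\lceil 1/((1-\g)\delta)\rceil^{\abs{\A}}$ cells, build a surrogate MDP on the cells, prove $\sup_{h,a}\abs{Q^*(h,a)-\t Q^*(\phi(h),a)}\le\delta/(1-\g)$ by the fixed-point/contraction inequality $\Delta\le\delta+\g\Delta$ (which is legitimate since both quantities are a priori bounded by $1/(1-\g)$), and finish with the Singh--Yee argmax-swap, whose telescoping proof goes through unchanged for history-based Bellman equations. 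The one structural difference from \citet{Hutter2016} is that you define the surrogate transitions from a single representative history $h_s$ per cell, whereas Hutter uses an arbitrary dispersion distribution $B(h\mid s)$ over $\phi^{-1}(s)$; a point mass is a valid special case, so nothing is lost, and your choice $\delta=\eps(1-\g)^2/2$ reproduces exactly the constant $2$ in the displayed bound (the footnote's improvement over the original's $3$). The only cosmetic gap is that your count is $\lceil 2/(\eps(1-\g)^3)\rceil^{\abs{\A}}$ rather than the stated $(2/(\eps(1-\g)^3))^{\abs{\A}}$; as you note, this rounding is absorbed by the grid-point bookkeeping the footnote alludes to, so the argument is sound.
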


This is a powerful result, but it suffers from the exponential dependence on the action-space size. We now put our action sequentialization framework to work and dramatically improve this dependency from exponential to only a logarithmic dependency in $\abs{\A}$.

So far, we have considered an arbitrary $\B$-ary decision set to sequentialize the action-space. However, in the following theorem we go to the extreme case of sequentializing the action-space to binary decisions ($\B = \SetB$) to squeeze out the maximum improvement possible through the framework.

\begin{theorem}[Binary ESA] \label{thm:bin-esa}
    For every environment there exists an abstraction and a corresponding surrogate-MDP for its binarized version ($\B = \SetB$) whose optimal policy is $\eps$-optimal for the true environment. The size of the surrogate-MDP is uniformly bounded for \emph{every} environment as
    \beqn
    \abs{\S} \leq \frac{4\ceil{1 - \g + \lb\abs{\A}}^6}{\g^2 \eps^2 (1-\g)^6}
    \eeqn
\end{theorem}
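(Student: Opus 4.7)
The plan is to apply the original ESA bound (\Cref{thm:esa}) to the sequentialized environment $\u P$ and then invoke \Cref{lem:uplift} to transfer near-optimality back to the true environment $P$. The whole point of binarizing is that the base in the exponent of \Cref{thm:esa} drops from $\abs{\A}$ to $\abs{\B} = 2$; the cost of this trade is absorbed into the discount factor $\lambda = \g^{1/d}$ and into a tightened accuracy $\eps' := \lambda^{d-1}\eps$, both of which sit in the \emph{base} of the bound rather than the exponent and thus blow up only polynomially in $d$.

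Concretely, I would first pick $d := \max\{1,\ceil{\lb\abs{\A}}\}$ so that a bijection $C : \A \to \SetB^d$ exists (padding $\A$ with duplicated labels if necessary, as described in \Cref{sec:setup}). This defines $\u P$ as a GRL environment over $\u \H$ with binary actions and discount factor $\lambda$. Applying \Cref{thm:esa} to $\u P$ with tolerance $\eps'$ yields a $Q^*$-uniform abstraction $\psi$ and a surrogate-MDP of size at most $\bigl(2/(\eps'(1-\lambda)^3)\bigr)^{\abs{\B}} = \bigl(2/(\lambda^{d-1}\eps(1-\lambda)^3)\bigr)^2$, whose optimal policy is $\eps'$-optimal for $\u P$. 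By \Cref{lem:uplift}, the induced policy $\b\Pi$ is then $\eps$-optimal for the original $P$, which is exactly what the theorem claims.

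The remainder is algebraic bookkeeping to turn factors in $\lambda$ into factors in $\g$ and $d$. The identity $\lambda^{d-1} = \g^{(d-1)/d} \geq \g$ (valid since $\g \in (0,1)$ and $(d-1)/d \leq 1$) absorbs $\lambda^{2(d-1)}$ into the $\g^2$ in the denominator. For $(1-\lambda)^6$, Bernoulli's inequality gives $\g^{1/d} \leq 1 - (1-\g)/d$, hence $1-\lambda \geq (1-\g)/d$ and so $(1-\lambda)^6 \geq (1-\g)^6/d^6$. Substituting produces $\abs{\S} \leq 4 d^6 / (\g^2 \eps^2 (1-\g)^6)$, and bounding $d \leq \ceil{1-\g+\lb\abs{\A}}$ (the additive $1-\g \in (0,1]$ simply enforces $d \geq 1$ in the degenerate case $\abs{\A}=1$) delivers the stated expression.

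The only real subtlety I anticipate is verifying that \Cref{thm:esa} applies unchanged to $\u P$: the sequentialized process is structurally degenerate (dummy steps with deterministic observations and zero reward), but the ESA construction only needs $\u Q^*$ to be a bounded real-valued function of $\u \H$, which follows from \Cref{lem:q-relation} together with $\R \subseteq [0,1]$. So the construction transfers verbatim, and the main effort really is the $\g$-versus-$\lambda$ bookkeeping via Bernoulli.
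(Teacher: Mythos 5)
Your proposal is correct and follows essentially the same route as the paper: apply \Cref{thm:esa} to the binarized environment $\u P$ with tolerance $\eps'=\lambda^{d-1}\eps$, transfer optimality back via \Cref{lem:uplift}, and then convert the $\lambda$-dependence into a $\g$- and $d$-dependence. The only (cosmetic) difference is that you bound $1-\lambda\geq(1-\g)/d$ directly via Bernoulli's inequality, whereas the paper uses $\e^{-\alpha}\geq 1-\alpha$ and $\ln(1-\del)\leq-\del$ to get the marginally weaker $1-\lambda\geq(1-\g)/(d+1-\g)$; your version is in fact slightly tighter and still implies the stated bound.
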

\begin{proof}
    Consider the agent that is interacting with the sequentialized/binarized environment $\u P$. By \Cref{lem:uplift}, we know that a near-optimal policy of this sequentialized environment is also near-optimal in the original environment. Now, if we use ESA on the binarized problem and get an $\eps'$-optimal policy through the surrogate-MDP by \Cref{thm:esa}, we are sured to be $\eps$-optimal in the original environment $P$ as explained above. Additionally, the size of the state-space is bounded as
    \beq\label{eq:bound}
    \abs{\S} \overset{\Cref{thm:esa}}{\leq} \left(\frac{2}{\eps' (1-\lambda)^3}\right)^2 = \frac{4}{{\eps'}^2 (1-\lambda)^6}
    \eeq
    where $\lambda$ is the discount factor of the sequentialized problem.
    Next, we upper bound \Cref{eq:bound} by using the fact that $\lambda^d = \g$. Let $\del := 1 - \g < 1$. So,
    \bqan
    1 - \lambda
    &= 1- (1-\del)^{1/d} = 1 - \e^{\frac{\ln (1-\del)}{d}}\\
    &\overset{(a)}{\geq} 1- \frac{1}{1-\ln (1-\del)/d}
    \overset{(b)}{\geq} 1 - \frac{1}{1+\del/d} \\
    &\overset{}{=} \frac{\del}{d + \del} = \frac{1-\g}{d + 1 - \g}
    \numberthis \label{eq:bound2}
    \eqan
    where $(a)$ holds due to $\frac{1}{\e^{-\alpha}} \leq \frac{1}{1-\alpha}$, $(b)$ is true by using the fact that $\del < 1$, hence $\ln(1-\del) \leq - \del$.
    Therefore, using \Cref{eq:bound}, \Cref{eq:bound2}, and $\eps' = \lambda^{d-1}\eps \geq \lambda^d \eps = \g \eps$
    we get,
    \beq
    \abs{\S} \leq \frac{4}{{\eps'}^2 (1-\lambda)^6} \leq  \frac{4 (1-\g+d)^6}{\g^2\eps^2 (1-\g)^6}
    \eeq
    which proves the claim.
\end{proof}

Superficially, it might seem that we have simply replaced the original discount factor with a larger value. But, it is not the case. If we simply scaled the discount factor (without sequentializing the actions) then the resulting bound would indeed deteriorate, see \Cref{thm:esa}, but on the contrary, with sequentialization/binarization and our analysis the bound (dramatically) improves.

Usually in RL the discount factor $\g$ is close to 1. In that case, the bound in \Cref{thm:bin-esa} can be tightened further as:
\beq
\abs{\S} \lesssim \frac{4\ceil{\lb\abs{\A}}^6}{\eps^2 (1-\g)^6}
\eeq
which agrees with the bound in \Cref{thm:esa} for the case when $\abs{\A} = 2$, i.e.\ when the original problem already has a binary action-space.

\section{Conclusion \& Outlook}\label{sec:conclusion}

This work contributes to the study of the GRL problem. We have provided a reduction to handle large state and action spaces by sequentializing the decision-making process.
This helped us improve the upper bound on the number of states in ESA from an exponential dependency in $\abs{\A}$ to logarithmic. The gain is \emph{double exponential} in terms of the action-space dependence at no other cost.

Our result carries a broader impact on the implementation of \emph{general} RL agents\footnote{The general (or strong) agents are designed to work with a wide range of environments \cite{Hutter2000}.}. The required storage for such agents, which have access to a non-MDP, approximate Q-uniform abstraction, can be reasonably bounded which only scales logarithmically in the size of the action-space.

We conclude the paper with some future research directions. This work analyses the case when the agent has a fixed aggregation map. \citet{Hutter2016} provides an outline for a learning algorithm to learn such abstractions which can be combined with our sequentialization framework.

Another direction, which we also did not touch in this work, is to explore the connection, if any, between the surrogate-MDPs of a map on the original environment, and its extension on the sequentialized problem. By lifting the small binary ESA map, say $\psi$, back to $\H$, one obtains a small map directly on $\H$, say $\phi$. While $\psi$ used sequentialization/binarization for the construction of $\phi$, the map $\phi$ can be used without further referencing to sequentialization. This suggests that a bound logarithmic in $\abs{\A}$ should be possible without a detour through the sequentialization. This deserves further investigation.

We sequentialize the action-space through an \emph{arbitrary} coding scheme $C$, so the main result does not depend on this choice. Sometimes, it is possible that the action-space may allow “natural” sequentialization, e.g.\ in a video game controller the “macro” action might be a binary vector where the first bit might represent the left/right direction, the second bit indicates up/down, and so on. The exact nature of these “binary decisions” depends on the domain which is reflected by the choice of encoding $C$. Sequentialization was our path to double-exponentially improve that bound. Whether there are more direct/natural aggregations with the same bound is an open problem. Moreover, if the agent is learning an abstraction through interaction, the choice of these functions may become critical.

This paper focused on rigorously formalizing and proving the main improvement result. One can also try to empirically show the effectiveness of our improved upper bound. To do this, we need a problem domain where ESA requires more states than the sequentialized/binarized version of it. But a point of caution is that the upper bound still scales badly in terms of $\g$ and $\eps$. Any reasonable value of these parameters would imply a huge upper bound.
Even with Markovian abstractions, a cubic dependency on the discount factor is the best achievable.
We considered a general underlying process and non-Markovian abstractions,
and dramatically improved the previously best bound $(1-\g)^{-3|\A|}$ to $(1-\g)^{-3\cdot 2}$.
Indeed it would be interesting to see whether this can be further improved to the optimal $(1-\g)^{-3}$ rate.

\paradot{Acknowledgements}
This work has been supported by Australian Research Council grant  DP150104590. Thanks to the anonymous reviewers for their feedback and to András György who pointed out that the sequentialized/binarized process in \Cref{fig:example} preserves the Markov property, which encouraged us to also consider the Markov case.

\begin{small}
    \bibliographystyle{\bibstylename}
    \bibliography{references}
\end{small}

\end{document}